\newtheorem{theorem}{Theorem}[section]
\title{ASFT: Aligned Supervised Fine-Tuning through Absolute Likelihood}
\author {
    % Authors
    Ruoyu Wang\textsuperscript{\rm 1},
    Jiachen Sun\textsuperscript{\rm 1},
    Shaowei Hua\textsuperscript{\rm 2},
    Quan Fang\textsuperscript{\rm 1}
}
\title{My Publication Title --- Single Author}
\author {
    Author Name
}
\title{My Publication Title --- Multiple Authors}
\author {
    % Authors
    First Author Name\textsuperscript{\rm 1},
    Second Author Name\textsuperscript{\rm 2},
    Third Author Name\textsuperscript{\rm 1}
}
\begin{document}

\maketitle

\begin{abstract}
Direct Preference Optimization (DPO) is a method for enhancing model performance by directly optimizing for the preferences or rankings of outcomes, instead of traditional loss functions. This approach has proven effective in aligning Large Language Models (LLMs) with human preferences.
Despite its widespread use across various tasks, DPO has been criticized for its sensitivity to the effectiveness of Supervised Fine-Tuning (SFT) and its limitations in enabling models to learn human-preferred responses, leading to less satisfactory performance. To address these limitations, we propose Aligned Supervised Fine-Tuning (ASFT), an effective approach that better aligns LLMs with pair-wise datasets by optimizing absolute likelihood for each response, rather than using the Bradley-Terry model, and eliminates the need for a reference model. Through theoretical gradient analysis, we demonstrate that ASFT mitigates the issue where the DPO loss function decreases the probability of generating human-dispreferred data at a faster rate than it increases the probability of producing preferred data. Additionally, we compare ASFT to DPO and its latest variants, such as the single-step approach ORPO, using the latest instruction-tuned model Llama3, which has been fine-tuned on UltraFeedback and HH-RLHF. We evaluated performance on instruction-following benchmarks like MT-Bench and traditional text generation metrics such as BLEU-4 and ROUGE-L.
Extensive experiments demonstrate that ASFT is an effective alignment approach, consistently outperforming existing methods.
\end{abstract}

% Uncomment the following to link to your code, datasets, an extended version or similar.
%
% \begin{links}
%     \link{Code}{https://aaai.org/example/code}
%     \link{Datasets}{https://aaai.org/example/datasets}
%     \link{Extended version}{https://aaai.org/example/extended-version}
% \end{links}

\section{Introduction}

Large language models (LLMs) trained on large datasets have shown exceptional abilities in generating text that resembles human writing \cite{brown2020languagemodelsfewshotlearners,jiang2023mistral7b,zhang2022optopenpretrainedtransformer}. However, their outputs can frequently deviate from human values and intentions, especially in sensitive or nuanced situations. To address these challenges, researchers have created techniques that utilize human feedback to inform the training process, thereby enhancing the alignment of LLMs with human expectations. Reinforcement learning from human feedback (RLHF) \cite{christiano2023deepreinforcementlearninghuman,stiennon2022learningsummarizehumanfeedback,ouyang2022traininglanguagemodelsfollow} is a widely used approach for fine-tuning language models, aiming to enhance their alignment with human values and preferences. While classical RLHF approach \cite{schulman2017proximalpolicyoptimizationalgorithms} has demonstrated notable success in aligning LLMs with human values, it also poses significant challenges. These challenges include the high computational costs involved in training both the reward and policy models, the risk of reward hacking, and the instability of the training process stemming from the complex interactions between the two models.

To overcome the limitations of RLHF, recent research has introduced Direct Preference Optimization (DPO) \cite{rafailov2024directpreferenceoptimizationlanguage}, a method that directly optimizes human preferences. DPO simplifies the RLHF pipeline by eliminating the need for a separate reward model. Instead, it directly uses human preferences to update the language model's parameters. Despite its considerable success across various tasks, DPO still presents certain limitations that can result in less satisfactory performance, as highlighted by previous research \cite{ethayarajh2024ktomodelalignmentprospect,xu2024contrastivepreferenceoptimizationpushing,feng2024analyzingunderstandinglimitationsdpo}. Specifically, DPO has been criticized for its sensitivity to the effectiveness of the Supervised Fine-Tuning (SFT) phase and its requirement for a reference model, which makes it challenging to implement \cite{xu2024contrastivepreferenceoptimizationpushing}.In other words, LLMs that \textbf{lack proper and effective SFT often demonstrate suboptimal performance when undergoing DPO}. Empirical evidence suggests that SFT and instruction tuning are essential for LLMs to comprehend and follow human directives, which is a prerequisite for effectively aligning with curated human feedback \cite{bai2022traininghelpfulharmlessassistant}. Moreover, the DPO loss function tends to \textbf{decrease the probability of generating human-dispreferred data more rapidly than it increases the likelihood of producing preferred data}, thereby hindering the model's capacity to learn and generate human-preferred responses \cite{feng2024analyzingunderstandinglimitationsdpo}. In fact, alignment methods based on the Bradley-Terry model \cite{19ff28b9-64f9-3656-ba40-08326a05748e} all encounter this issue.

In this work, we theoretically analyze the challenges in gradient optimization of preference alignment methods based on the Bradley-Terry model. To solve the problem, we propose Aligned Supervised Fine-Tuning (ASFT), an effective approach that better aligns LLMs with pairwise datasets by \textbf{optimizing the absolute likelihood} for each response, rather than relying on the Bradley-Terry model. To simplify the training process, policy model can learn human preferences solely from SFT without requiring an additional reference model. Our algorithm core is to \textbf{maximize the maximum likelihood estimation of chosen answers and minimize the maximum likelihood estimation of rejected answers}. This approach significantly bolsters the robustness of large models during training in Section \ref{Sec:Optimization}. In summary, ASFT has the following advantages:
\begin{itemize}
    \item \textbf{Lower computational overhead and more efficient training process}: ASFT can align LLMs with human preferences solely during the SFT stage, eliminating the need for a reference model.
    \item \textbf{More reasonable gradient descent strategy}: ASFT can mitigate the issues caused by gradient optimization strategies that impede the model's ability to learn human-preferred responses. 
    \item \textbf{Significant performance on benchmark}: ASFT demonstrated superior performance on MT-Bench and particularly on Arena-Hard, where it outperformed SFT by 48\% and significantly exceeded the results of DPO and its variants.
\end{itemize}

\section{Related Works}
\subsection{Direct Preference Optimization (DPO)}

DPO \cite{rafailov2024directpreferenceoptimizationlanguage} is a method that focuses on aligning LLMs with pair-wise preference data $\left(x, y_w, y_l\right)$ by directly optimizing the relationship between model outputs and human preferences. DPO simplifies the RLHF pipeline by eliminating the need for a separate reward model, instead, it employs a closed-form expression with an optimal policy to reparameterize the reward function $r$:

\begin{equation}
\begin{aligned}
r(x, y)=\alpha \log \frac{\pi_\theta(y \mid x)}{\pi_{\mathrm{ref}}(y \mid x)}+\alpha \log Z(x)
\end{aligned}
\end{equation}

By integrating this reward construction into the Bradley-Terry (BT) ranking objective \cite{19ff28b9-64f9-3656-ba40-08326a05748e}, 

\begin{equation}
\begin{aligned}
p\left(y_w \succ y_l \mid x\right)=\sigma\left(r\left(x, y_w\right)-r\left(x, y_l\right)\right)
\end{aligned}
\end{equation} 

DPO can represent the probability of preference data using the policy model instead of the reward model, resulting in the following objective:

\begin{equation}
\begin{aligned}
& \mathcal{L}_{\mathrm{DPO}}\left(\pi_\theta ; \pi_{\mathrm{ref}}\right)=-\mathbb{E}_{\left(x, y_w, y_l\right) \sim \mathcal{D}}\\
& \left[\log \sigma\left(\alpha \log \frac{\pi_\theta\left(y_w \mid x\right)}{\pi_{\mathrm{ref}}\left(y_w \mid x\right)}-\alpha \log \frac{\pi_\theta\left(y_l \mid x\right)}{\pi_{\mathrm{ref}}\left(y_l \mid x\right)}\right)\right]
\end{aligned}
\end{equation}

This objective allows DPO to optimize the policy directly with respect to the preferences, without the need for an intermediate reward model.

\subsection{Alignment without Reference Model}

Empirical findings suggest that, non-reinforcement learning (non-RL) alignment methods \cite{rafailov2024directpreferenceoptimizationlanguage,azar2023generaltheoreticalparadigmunderstand} are sensitive to the effectiveness of reference model \cite{feng2024analyzingunderstandinglimitationsdpo,tunstall2023zephyrdirectdistillationlm}. In contrast, there have been approaches that aim to build human-aligned language models by conducting SFT using filtered datasets alone \cite{zhou2023limaalignment,haggerty2024selfsupervisedlearningskincancer} indicated SFT with a limited amount of filtered data can effectively enhance the model's capabilities. Furthermore, the proposal and significant performance of certain single-step fine-tuning methods \cite{hong2024orpomonolithicpreferenceoptimization,meng2024simposimplepreferenceoptimization} clearly demonstrate the feasibility of aligning large models without the need for reference model.

ORPO \cite{hong2024orpomonolithicpreferenceoptimization} explores the role of SFT, providing a theoretical foundation for incorporating preference alignment into SFT. SimPo \cite{meng2024simposimplepreferenceoptimization} demonstrates the potential of single-step fine-tuning for aligning large models through extensive experiments. However, the feasibility for aligning large models without reference model is still insufficiently explored from the perspective of gradient strategies.

\subsection{Limitations of Bradley-Terry model in aligning LLMs}

The Bradley-Terry (BT) model, often used in the context of learning from human preferences, has certain limitations when applied to large model alignment, particularly within the framework of reinforcement learning and non-reinforcement learning (non-RL) alignment method. \citeauthor{azar2023generaltheoreticalparadigmunderstand} conducted extensive research on the assumptions underlying the application of the BT model in aligning LLMs. \citeauthor{feng2024analyzingunderstandinglimitationsdpo} concluded that DPO impairs the ability of large models to learn human preference responses, based on their analysis of the optimization process of DPO. In fact, not only DPO but also other alignment methods based on the BT model face this issue. We conducted a detailed discussion on the optimization process of alignment methods based on the BT model and compared them with the optimization strategy of ASFT in Section \ref{Sec:Optimization}.

\section{Method}

In this section, we first review the derivation of the BT model. We then explain why $-\log \sigma(x)$ is commonly used as a loss function in deep learning models. Finally, we derive the ASFT loss function by introducing new scoring rules and calculating the scores for the selected response and the rejected answer separately.

\subsection{Bradley-Terry model derivation}

The Bradley-Terry model is a classic human preference model. It is a probabilistic framework used to predict the outcomes of two competitors (such as individuals or teams). Commonly applied to pairwise comparison data, it estimates and compares the relative abilities of individuals or projects.

The BT model is a probabilistic model, where the probability of the preference data $i>j$ is given by: 

\begin{equation}
P(i>j)=\frac{p_i}{p_i+p_j}
\end{equation}

Where $p_i$ is the positive real fraction of $i$, we can reparameter it in the following form:

\begin{equation}
P(i>j)=\frac{e^{\beta_i}}{e^{\beta_i}+e^{\beta_j}}=\frac{1}{1+e^{-\left(\beta_i-\beta_j\right)}}
\end{equation}

The score function corresponding to the participants is given by: $p_i=e^{\beta_i}$, which is consistent with the sigmoid function.

\begin{equation}
\sigma(x)=\frac{1}{1+e^{-x}}
\end{equation}

Finally, we can use Maximum Likelihood Estimation (MLE) to calculate the score for each player:

\begin{equation}
\arg \min _\beta \sum_{i,j}-\log \sigma\left(\beta_i-\beta_j\right)
\end{equation}

\subsection{-Log sigmoid function}

The -logsigmoid function can be expressed as:
\begin{equation}
-\log (\sigma(x))=-\log \left(\frac{1}{1+e^{-x}}\right)=-\log \sigma(x)
\end{equation}

Through the formula, we can observe that the -logsigmoid function has two fundamental properties:

\begin{itemize}
    \item As $x \rightarrow-\infty, f(x) \approx x$
    \item As $x \rightarrow+\infty, f(x) \rightarrow 0$
\end{itemize}

These properties make the -logsigmoid function an effective choice for a loss function in machine learning models, where it provides both numerical stability and penalizes incorrect predictions effectively.

For the BT model, the larger the difference between score the $r_\phi\left(x, y_w\right)$ and $r_\phi\left(x, y_l\right)$, the smaller the value of the loss function -logsigmoid:

\begin{equation}\label{eq:9}
\mathcal{L}_R\left(r_\phi, \mathcal{D}\right)=-\mathbb{E}_{\left(x, y_w, y_l\right) \sim \mathcal{D}}\left[\log \sigma\left(\beta r_\phi\left(x, y_w\right)-\beta r_\phi\left(x, y_l\right)\right)\right]
\end{equation}

\subsection{Aligned Supervised Fine-Tuning}

Unlike the BT model, which constructs the loss function by comparing two different answers, our approach uses the probability of the response output by the LLMs as its score. Specifically, we design the loss function to maximize the score of the chosen response while minimizing the score of the discarded response.

The score for the chosen response is as follows:

\begin{equation}
\begin{aligned}
S\left(x, y_\omega\right) & =\pi_\theta\left(y_\omega \mid x\right) \\
& =\frac{\pi_\theta\left(y_\omega \mid x\right)}{1-\pi_\theta\left(y_\omega \mid x\right)+\pi_\theta\left(y_\omega \mid x\right)} \\
& =\sigma\left(\log \frac{\pi_\theta\left(y_\omega \mid x\right)}{1-\pi_\theta\left(y_\omega \mid x\right)}\right)
\end{aligned}
\end{equation}

Let $\log \frac{\pi_\theta\left(y_\omega \mid x\right)}{1-\pi_\theta\left(y_\omega \mid x\right)}$ denote $f_\theta\left(x, y_w\right)$, then we define ASFT optimisation objective as:

\begin{equation}
\left\{
\begin{array}{l}
\displaystyle \max _{\pi_\theta} \mathbb{E}_{\left(x, y_\omega, y_l\right) \sim D} 
\; \sigma\left(f_\theta\left(x, y_\omega\right)\right) \\
\displaystyle \min _{\pi_\theta}  \mathbb{E}_{\left(x, y_\omega, y_l\right) \sim D} 
\; \sigma\left(f_\theta\left(x, y_l\right)\right)
\end{array}
\right.
\end{equation}

The above equation is equivalent to

\begin{equation}
\left\{
\begin{array}{l}
\displaystyle \max _{\pi_\theta} \mathbb{E}_{\left(x, y_\omega, y_l\right) \sim D} 
\; \sigma\left(f_\theta\left(x, y_\omega\right)\right) \\
\displaystyle \max _{\pi_\theta} \mathbb{E}_{\left(x, y_\omega, y_l\right) \sim D} 
\; 1 - \sigma\left(f_\theta\left(x, y_l\right)\right)
\end{array}
\right.
\end{equation}

The loss function form of $\mathcal{L}_{align}$ obtained from above is as follows:

\begin{equation}\label{eq:13}
\mathcal{L}_{align}=-\log \sigma\left(f_\theta\left(x, y_w\right)\right)-\log \sigma\left(-f_\theta\left(x, y_l\right)\right)
\end{equation}

The final objective function ASFT in Equation \ref{eq:14} consists of two components: Supervised Fine-Tuning (SFT) loss $\mathcal{L}_{SFT}$ and loss for alignment $\mathcal{L}_{align}$.

\begin{equation}\label{eq:14}
\mathcal{L}_{ASFT}=\mathbb{E}_{\left(x, y_w, y_l\right)}\left[\mathcal{L}_{SFT}+\beta \mathcal{L}_{align}\right]
\end{equation}

$\mathcal{L}_{SFT}$ in Equation \ref{eq:14} follows the standard approach of minimizing the negative log-likelihood (NLL) loss, which is derived from causal language modeling. This objective is aimed at maximizing the probability of generating the target tokens given the input context.

In contrast to alignment methods based on the BT model, which utilize relative likelihoods for aligning large models, $\mathcal{L}_{align}$ in Equation \ref{eq:13} employs absolute likelihoods to further align large models based on paired data. This approach enhances the model's ability to adjust its outputs more precisely according to specific performance criteria, thereby improving its effectiveness in tasks that require fine-grained distinctions between similar inputs.

\section{Improvements in The Optimization Process}\label{Sec:Optimization}

Previous studies have shown that DPO impedes the ability of LLMs to learn human-preferred responses due to limitations in its gradient optimization process. \cite{feng2024analyzingunderstandinglimitationsdpo} Indeed, we find that alignment methods based on BT models suffer from this issue. In this section, we conducted a detailed analysis of the gradients in alignment methods based on the BT model and provided insights from a gradient perspective on SFT plays a crucial role in enhancing the performance of reference-dependent approaches. Subsequently, we discuss the advantages of the gradient strategy employed by ASFT and demonstrate its exceptional performance, even without the need for a reference model derived through SFT.

\subsection{Optimization Process of BT model}

\begin{figure*}[t]
\centering
\subfloat[The gradient vector field and optimization plane of the loss function based on the BT model]{
  \includegraphics[width=0.75\textwidth]{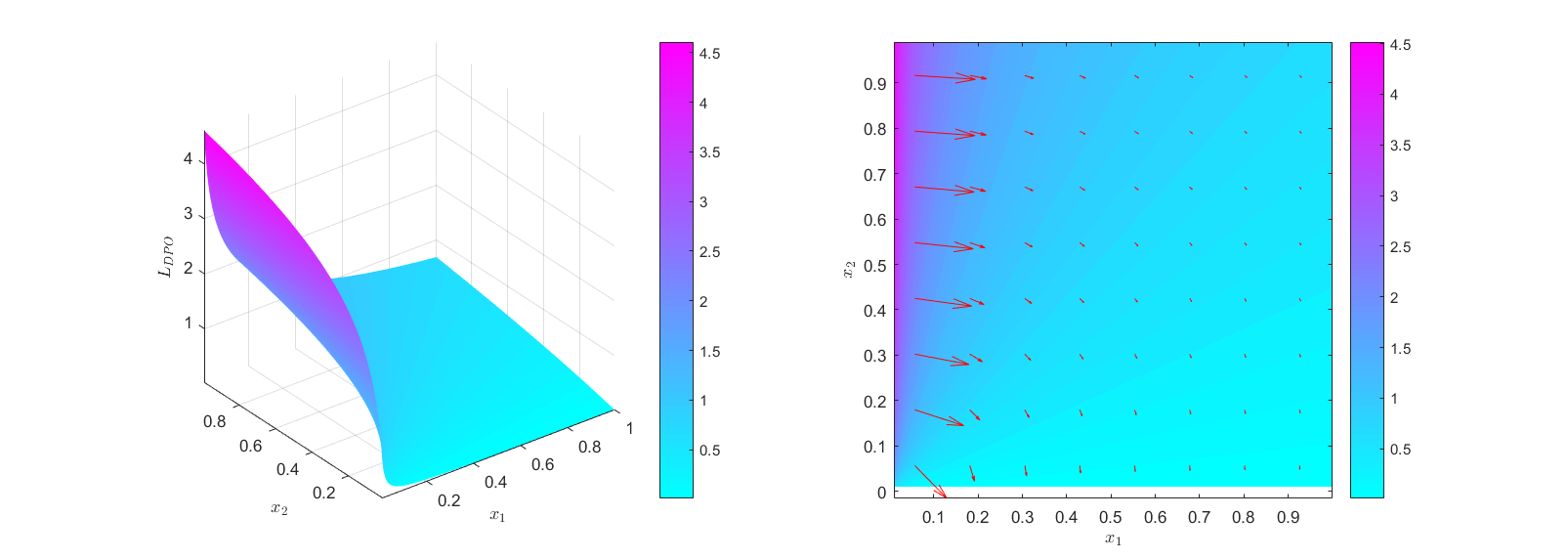}
  \label{fig:a}
}\\ 
\subfloat[The gradient vector and optimization plane field of $\mathcal{L}_{align}$]{
  \includegraphics[width=0.75\textwidth]{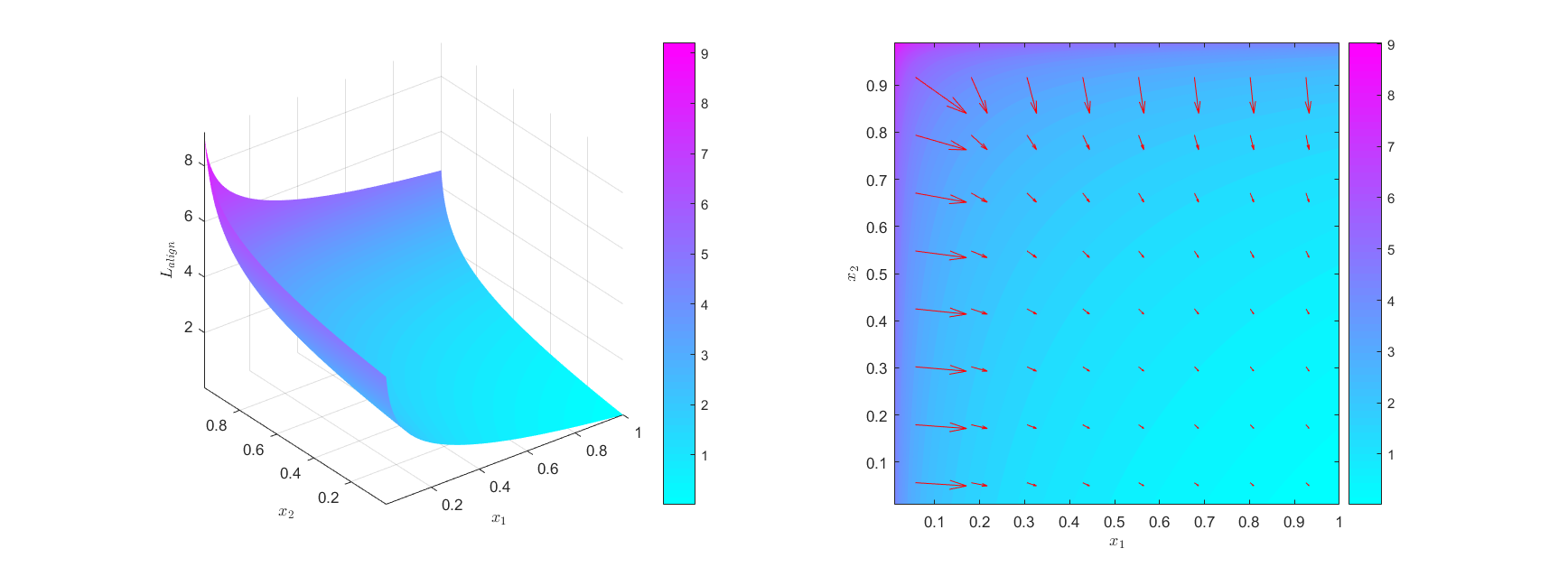}
  \label{fig:b}
}%

\caption{Figures (a) and Figure (b) on the left depict the values of the loss function when preferences and non-preferences responses are generated at different probabilities. On the right, the figures provide a top-down view of the optimization plane of the loss function, showing the gradient field. This is marked with red arrows at various positional points, indicating the direction and magnitude of optimization. The direction of these red arrows represents the path of gradient-based optimization, while the length of the arrows indicates the strength of the optimization.}
\label{fig:2}
\end{figure*}

Existing alignment methods often frame the task as a binary classification problem, employing the Bradley-Terry (BT) model to derive a negative log-likelihood loss in Equation \ref{eq:9}. To simplify the analysis, we denote $r_\phi\left(x, y_w\right)=\log x_1$ and $r_\phi\left(x, y_w\right)=\log x_2$ \footnote{To ensure that $x_1$ reflects the probability of generating a human-preferred response, we adopt a form similar to $r(x, y) \propto \log \pi_\theta(y \mid x)$ in DPO, setting $r_\phi\left(x, y_w\right)=\log x_1$ for consistency. The definition of $x_2$ follows the same principle.} . Thus, the objective function of BT model can be written in the following form:

\begin{equation}\label{eq:15}
\begin{aligned}
\mathcal{L}_R\left(r_\phi, \mathcal{D}\right)= & -\log \sigma\left(\beta r_\phi\left(x, y_w\right)-\beta r_\phi\left(x, y_l\right)\right)\\
= & -\log \left(\frac{x_1^\beta}{x_1^\beta+x_2^\beta}\right)
\end{aligned}
\end{equation}

We denote the transformed form as $\mathcal{L}_{R}\left(x_1 ; x_2\right)$. And \citeauthor{feng2024analyzingunderstandinglimitationsdpo} have calculated the partial derivatives of Equation \ref{eq:15} with respect to $x_1$ and $x_2$:

\begin{equation}
\left\{\begin{array}{l}
\frac{\partial \mathcal{L}_{R}\left(x_1 ; x_2\right)}{\partial x_1}=-\frac{\beta x_2^\beta}{x_1\left(x_1^\beta+x_2^\beta\right)} \\
\frac{\partial \mathcal{L}_{R}\left(x_1 ; x_2\right)}{\partial x_2}=\frac{\beta x_2^{\beta-1}}{x_1^\beta+x_2^\beta} .
\end{array}\right.
\end{equation}

To evaluate the impact on generation probabilities for human-preferred and dispreferred responses, and to enable a systematic comparison with ASFT, we present a visualization of the optimization plane (loss landscape) and the gradient field in Figure \ref{fig:2}(\subref{fig:a}). Building upon the findings of \cite{feng2024analyzingunderstandinglimitationsdpo}, we distill several critical features of the gradient field, which we summarize into the following key aspects:

\begin{itemize}
    \item \textbf{Case 1} In the initial optimization stages, when $x_1$ is very small and $x_2$ is very large, LLMs tend to generate dispreferred responses. The BT gradient flow mainly increases $x_1$, with minor changes to $x_2$, as shown in the top left corner of Figure \ref{fig:2}(\subref{fig:a}).
    \item \textbf{Case 2} When $x_1$ and $x_2$ are both large, LLMs may generate both preferred and dispreferred responses. In this scenario, The BT gradient flow slightly increases $x_1$ and decreases $x_2$, leading to minimal overall changes and potentially trapping the optimization in saddle points. as illustrated in the top right corner of Figure \ref{fig:2}(\subref{fig:a}).
    \item \textbf{Case 3} When $x_2$ is very small, indicating limited LLM response variability, the BT gradient flow quickly decreases $x_2$ with minor adjustments to $x_1$, as depicted in the lower part of Figure \ref{fig:2}(\subref{fig:a}).
\end{itemize}

By analyzing the gradient vector field of BT model, we identified two limitations of BT model. 

On one hand, $\mathcal{L}_{R}\left(x_1 ; x_2\right)$ has a more significant impact on $x_2$ than on $x_1$, because the gradient with respect to $x_2$ is larger than that for $x_1$. As a result, $\mathcal{L}_{R}\left(x_1 ; x_2\right)$ decreases the probability of generating human-dispreferred data more rapidly than it increases the probability of generating preferred responses.

On the other hand, the initial state in the gradient vector field significantly influences the final optimization outcome. Specifically, the initial position of  LLMs with SFT might be located in the lower-left corner of Figure \ref{fig:2}(\subref{fig:a}), indicating low probabilities of generating both human-preferred and dispreferred responses, with the gradient direction not fully prioritizing the enhancement of preferred responses. Alternatively, if the initial position is in the upper-right corner of Figure \ref{fig:2}(\subref{fig:a}), the very small gradients present there can lead to slow convergence and difficulties in escaping local minima, potentially resulting in insufficient learning from human-preferred data. 

\subsection{Improvements in ASFT}

To analyze the optimization process of ASFT and compare it with the BT model, We apply the same transformation to the alignment loss $\mathcal{L}_{align}$ in ASFT, as shown in Equation \ref{eq:17}. Denote the transformed form as $\mathcal{L}_{align}\left(x_1 ; x_2\right)$. We then analyze its partial derivatives with respect to $x_1$ and $x_2$, respectively.

\begin{equation}\label{eq:17}
\begin{aligned}
\mathcal{L}_{align}= & -\log \sigma\left(f_\theta\left(x, y_\omega\right)\right)-\log \sigma\left(-f_\theta\left(x, y_l\right)\right) \\
= & -\log \sigma\left(\log \frac{x_1}{1-x_1}\right) \\
& -\log \sigma\left(-\log \frac{x_2}{1-x_2}\right)
\end{aligned}
\end{equation}

The partial derivatives of Equation \ref{eq:17} with respect to $x_1$ and $x_2$ are given by: 

\begin{equation}
\left\{\begin{array}{c}
\frac{\partial \mathcal{L}_{align}\left(x_1 ; x_2\right)}{\partial x_1}=-\frac{1}{x_1} \\
\frac{\partial \mathcal{L}_{align}\left(x_1 ; x_2\right)}{\partial x_2}=\frac{1}{1-x_2}
\end{array}\right.
\end{equation}

We compute the partial derivatives of the alignment loss function $\mathcal{L}_{align}$ with respect to $x_1$ and $x_2$, and construct the corresponding gradient fields in Figure \ref{fig:2}(\subref{fig:b}) to visually reveal the dynamic characteristics of ASFT. Our analysis shows that in all three scenarios previously discussed, the gradients in ASFT more effectively adjust $x_1$ and $x_2$, enabling LLMs to better learn to generate responses aligned with human preferences while avoiding those that are not.

\begin{itemize}
    \item For \textbf{Case 1}, the ASFT gradient flow simultaneously increases $x_1$ and decreases $x_2$, allowing the loss function to converge more quickly to the optimal solution.
    \item For \textbf{Case 2}, the ASFT gradient flow rapidly reduces $x_2$ with minimal change in $x_1$, helping LLMs quickly decrease the probability of generating responses that are misaligned with human preferences.
    \item For \textbf{Case 3}, the ASFT gradient flow rapidly increases $x_1$ with minimal change in $x_2$, aiding LLMs in quickly increasing the likelihood of generating responses that align with human preferences.
\end{itemize}

The alignment loss in ASFT has a balanced impact on both $x_1$ and $x_2$. As a result, ASFT is able to learn how to generate human-preferred responses while simultaneously learning to avoid generating responses that humans do not prefer.

For each pairwise preference data $\left(x, y_w, y_l\right) \in D$, the update rate of $x_1$ in $\mathcal{L}_{align}\left(x_1 ; x_2\right)$ with respect to $x_2$ is $\frac{1-x_2}{x_1}$. As $x_1$ tends to increase and $x_2$ tends to decrease during optimization, we have $\frac{1-x_2}{x_1} \rightarrow 1$. This indicates that $\mathcal{L}_{align}$ updates $x_1$ and $x_2$ in a balanced manner. As a result, ASFT effectively identifies and addresses the key factors in different scenarios during the optimization process. It prioritizes addressing the primary issues without showing a fixed bias toward either $x_1$ or $x_2$. This characteristic allows ASFT to avoid unnecessary actions, thereby enhancing optimization efficiency. In other words, ASFT guides LLMs to focus simultaneously on both improving responses that align with human preferences and mitigating those that do not.

Based on the characteristics of the gradient field of $\mathcal{L}_{align}$, we find that ASFT is not sensitive to the initial model. 

\begin{itemize}
    \item When the initial position of the LLMs is in the upper-right corner of Figure \ref{fig:2}(\subref{fig:b}), ASFT shifts its focus to reducing the probability of generating human-dispreferred responses. In this scenario, ASFT demonstrates its ability to quickly decrease this probability and prevent the LLMs from generating responses that are not aligned with human preferences.
    \item When the initial position of the LLMs is on the left side of \ref{fig:2}(\subref{fig:b}), ASFT shifts its emphasis to increasing the probability of generating human-preferred responses. In this case, ASFT is able to rapidly enhance this probability. Specifically, when the initial position of the LLMs is in the upper-left corner of Figure \ref{fig:2}(\subref{fig:b}), ASFT can simultaneously increase the probability of generating human-preferred responses while decreasing the probability of generating dispreferred responses.
\end{itemize}

In summary, by analyzing the gradient field of the alignment loss in ASFT, we conclude that ASFT does not suffer from the limitations of BT model, such as an excessive focus on dispreferred responses. This means that ASFT can increase the probability of generating human-preferred responses while decreasing the probability of generating responses that humans do not favor. Additionally, ASFT is not sensitive to the initial model, making it robust across various starting points.

\subsection{Analysis of Feasibility Without a Reference Model}

\begin{table*}[t]
\centering
\begin{tabular}{lccccc}
\toprule
& \textbf{Number of Examples} & \textbf{Baseline Model} & \textbf{Judge Model} & \textbf{Scoring Type} & \textbf{Metric} \\
\midrule
\textbf{Arena-Hard} & 500 & GPT-4-0314 & GPT-4 Turbo & Pairwise comparison & Win rate \\
\textbf{MT-Bench} & 80 & - & GPT-4/GPT-4 Turbo & Single-answer grading & Rating of 1-10 \\
\bottomrule
\end{tabular}
\caption{Details regarding the evaluation metrics for Arena-Hard and MT-Bench. The term ``baseline model'' denotes the model used for comparison.}
\label{tab1}
\end{table*}

Based on the previous analysis, we know that the loss function in BT models is sensitive to the reference model. The reference model, used as the initial model, carries inherent uncertainty, which can negatively impact optimization results, leading to suboptimal performance in fine-tuning large models for alignment. Specifically, after the SFT stage, the reference model is more likely to fall into \textbf{Case 2}, where the BT model's optimization process may struggle with getting stuck at a saddle point. In general, removing the reference model makes the initial position of LLMs unpredictable. When the initial model is located on the left side of Figure \ref{fig:2}(\subref{fig:a}), the focus of optimization should be on increasing the probability of generating human-preferred responses. Specifically, in \textbf{Case 3}, the BT model tends to focus more on reducing the probability of generating dispreferred responses, which can hinder optimization by providing insufficient gradients to maximize the probability of generating preferred responses. Therefore, despite the improved memory and computational efficiency from removing the reference model, the objective function under the BT model has inherent limitations, leading to instability in the results.

In ASFT, our objective function overcomes the limitations of the BT model, making it less sensitive to the initial model. Whether the initial model is on the left or right side of Figure \ref{fig:2}(\subref{fig:b}), ASFT can efficiently optimize towards the minimum. Thus, our approach enables effective optimization without the need for a reference model.

\section{Experiments}

\subsection{Experiment Setups}

\textbf{Model and training settings.} We applied our proposed algorithm to fine-tune the Llama3-8B-Instruct \cite{dubey2024llama3herdmodels} to validate its effectiveness. We observed that tuning hyperparameters is crucial for achieving optimal performance across all offline preference optimization algorithms \footnote{For ORPO, $\lambda$ is typically set to 0.1. For DPO, $\beta$ is commonly within the range of $[0.1,0.5]$.}. Specifically, for ASFT, setting the parameter $\beta$ within the range of 0.1 to 0.5 generally yields satisfactory results. In this experiment, $\beta$ is set to 0.1. The learning rate for the training process is set to 1e-05. 

\noindent\textbf{Reward dataset.} We employed the UltraFeedback \cite{cui2024ultrafeedbackboostinglanguagemodels} dataset, a GPT-4 annotated instruction tracking dataset containing approximately 64,000 instructions. This dataset is strategically designed to overcome the challenges associated with acquiring human feedback by automating the collection of high-quality AI feedback. Additionally, we utilized the HH-RLHF \cite{bai2022traininghelpfulharmlessassistant} dataset to evaluate the performance of our model on traditional text generation metrics. This approach enables a comprehensive assessment of the model's capabilities in generating coherent and contextually relevant responses under various conversational scenarios. 

\noindent\textbf{Baselines.} We compared ASFT with other offline preference optimization methods, including IPO \cite{azar2023generaltheoreticalparadigmunderstand}, DPO \cite{rafailov2024directpreferenceoptimizationlanguage}, and ORPO \cite{hong2024orpomonolithicpreferenceoptimization}. IPO is a theoretically grounded method that circumvents the assumptions made by DPO, specifically the notion that pairwise preferences can be substituted with point-based rewards. ORPO introduces an odd-ratio comparison term involving a reference-free model, enabling direct contrasts between winning and losing responses with the policy model, and is jointly trained with the SFT objective.

\noindent\textbf{Evaluation metrics.} We primarily employed two of the most popular open-ended instruction tracking benchmarks to evaluate our model: MT-Bench \cite{zheng2023judgingllmasajudgemtbenchchatbot} and Arena-Hard v0.1 \cite{li2024crowdsourceddatahighqualitybenchmarks}. These benchmarks assess the model's multifaceted conversational capabilities across a variety of queries and have been widely adopted by the community (for detailed information, see Table \ref{tab1}).

MT-Bench encompasses eight categories with a total of 80 questions. The recently released Arena-Hard is an enhanced version of MT-Bench, containing 500 well-defined technical problem-solving queries. We report scores according to the evaluation protocols specified for each benchmark. For Arena-Hard, we report the win rate (WR) relative to a baseline model. For MT-Bench, scores are derived using GPT-4 as the adjudicating model.

Additionally, we utilized the alpaca\_en \cite{alpaca} dataset as a test set to evaluate the performance of ASFT. This evaluation employed traditional text generation metrics, such as BLEU \cite{Papineni2002} and ROUGE\cite{Lin2004}, to compare the performance of ASFT against ORPO.

\subsection{Experimental Results}

\begin{table}[t]
\centering
\begin{tabular}{lccc}
\toprule
\multirow{3}{*}{\textbf{Method}} & \multicolumn{3}{c}{\textbf{Llama3-Instruct (8B)}} \\
\cmidrule(lr){2-4}
& \textbf{Arena-Hard} & \multicolumn{2}{c}{\textbf{MT-Bench}} \\
\cmidrule(lr){2-2} \cmidrule(lr){3-4}
& \textbf{WR (\%)} & \textbf{GPT-4 Turbo} & \textbf{GPT-4} \\
\midrule
SFT & 21.9 & 6.83 & 7.92 \\
DPO & 31.8 & \textbf{7.08} & 8.03 \\
IPO & 29.6 & 6.95 & 7.95 \\
ORPO & 28.4 & 7.02 & 8.01 \\
\midrule
\textbf{ASFT} & \textbf{32.5} & 7.05 & \textbf{8.05} \\
\bottomrule
\end{tabular}
\caption{Llama3-Instruct (8B) Results Comparison.We use off-the-shelf models as the SFT model}
\label{tab2}
\end{table}

\begin{table*}[t]
\centering
\begin{tabular}{ccccccc}
\toprule
\textbf{Dataset} & \textbf{Alignment Method} & \textbf{Model} & \textbf{BLEU-4} & \textbf{ROUGE-1} & \textbf{ROUGE-2} & \textbf{ROUGE-L} \\ 
\midrule
\multirow{3}{*}{\textbf{HH-RLHF}} & SFT & Llama3-8B-Instruction & 23.7 & 32.70 & 13.51 & 14.56 \\ 
& ORPO & Llama3-8B-Instruction & 35.91 & 41.42 & 20.50 & 27.84\\ 
& ASFT & Llama3-8B-Instruction & 37.70 & 43.71 & 22.42 & 30.09 \\ 
\midrule
\multirow{3}{*}{\textbf{UltraFeedback}} & SFT & Llama3-8B-Instruction & 25.62 & 36.28 & 15.66 & 16.73 \\ 
& ORPO & Llama3-8B-Instruction & 40.96 & 44.13 & 23.85 & 33.56 \\ 
& ASFT & Llama3-8B-Instruction & 42.43 & 46.60 & 25.38 & 34.98 \\ 
\bottomrule
\end{tabular}
\caption{BLEU and ROUGE Scores for Different Settings}
\label{tab3}
\end{table*}

\textbf{Difference in benchmark discrimination}

Although both MT-Bench and Arena-Hard are widely utilized, we observed that MT-Bench exhibits poor separability among different methods. The minor differences between methods in MT-Bench could be attributed to randomness, potentially due to its limited evaluation dataset size. In contrast, Arena-Hard demonstrates superior discriminative ability, providing clearer distinctions between the performances of various approaches.

\noindent\textbf{Efficiency of ASFT}

ASFT operates without the need for a reference model, making it lighter and simpler to implement than DPO and similar reference-dependent methods. This results in reduced computational expenses and enhances the efficiency of the training process. During model fine-tuning, ASFT reduces time by 13\% and decreases peak GPU memory usage by 16.7\% compared to reference-dependent methods (Figure \ref{fig:3}(\subref{fig:3a})).

\begin{figure}[h]
\centering
\subfloat[Likelihood margin.]{
  \includegraphics[width=0.2\textwidth]{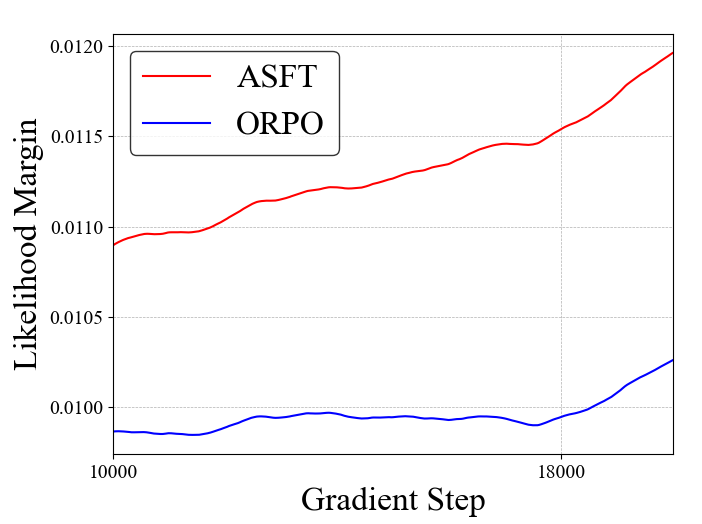}
  \label{fig:3a}
}
\subfloat[Runtime and GPU memory.]{
  \includegraphics[width=0.25\textwidth]{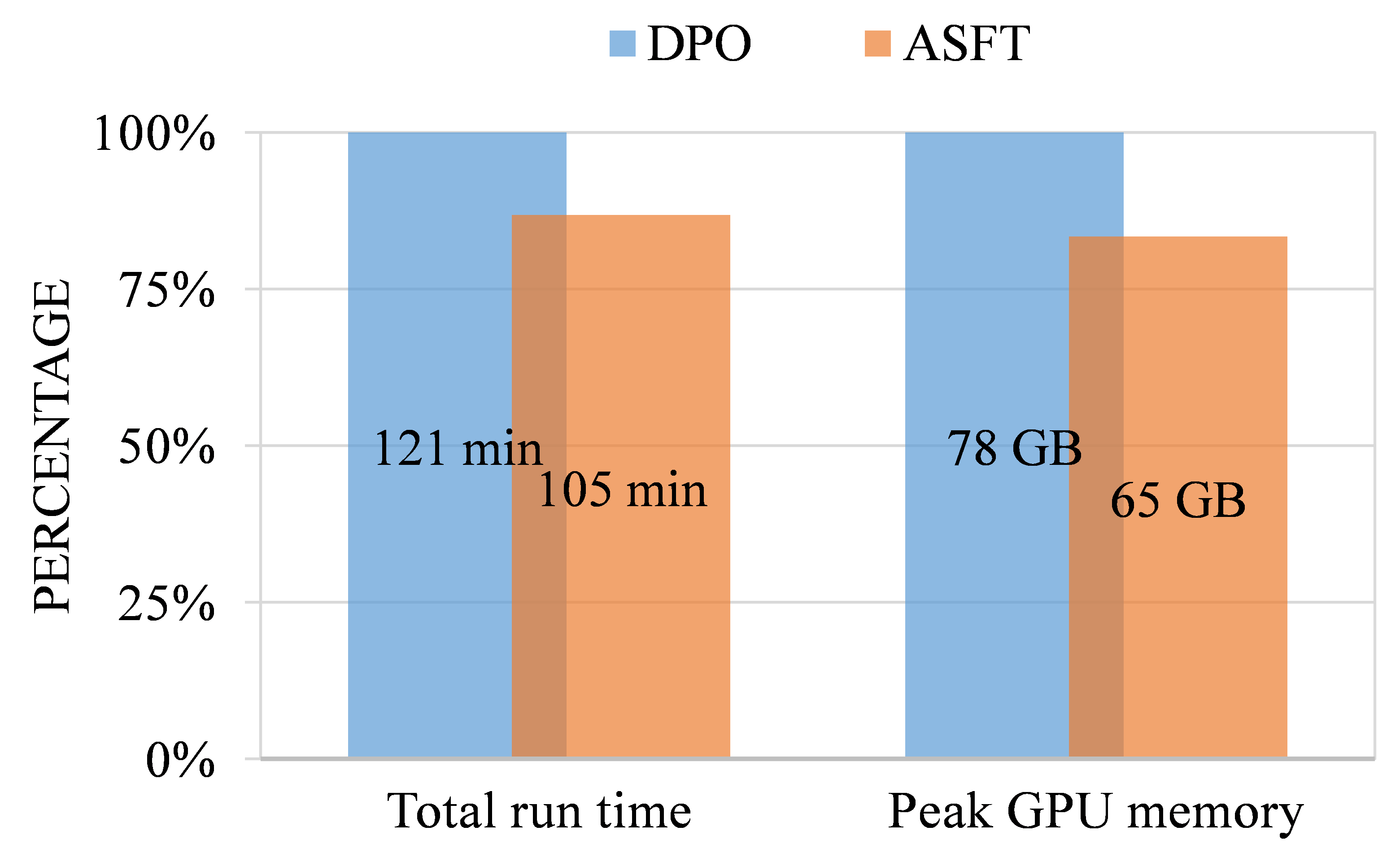}
  \label{fig:3b}
}%
\caption{Comparison measured on UltraFeedback. (a) Likelihood margin between ASFT and ORPO. (b) Total runtime and memory usage for ASFT and DPO}
\label{fig:3}
\end{figure}

\noindent\textbf{Effectiveness of ASFT}

In Table \ref{tab2} and \ref{tab3}, we present our main evaluation results. The experimental outcomes demonstrate that ASFT achieves comparable or improved performance over conventional alignment methods on the MT-Bench assessment. Notably, ASFT exhibits a substantial improvement on the Arena-Hard evaluation, compared to the baseline SFT model, there is a 48\% improvement in WR, underscoring its robustness. Furthermore, compared to ORPO and the base SFT model, ASFT's improvements in BLEU-4, ROUGE-1, ROUGE-2, and ROUGE-L metrics underscore a significant enhancement in the model's ability to follow instructions accurately. Additionally, The data likelihood margin of ASFT is significantly higher than that of ORPO (Figure \ref{fig:3}(\subref{fig:3a})). This demonstrates ASFT's effectiveness in refining the precision of language model responses across various evaluation frameworks.

\section{Limitations}
\textbf{Hyperparameter in experimental setting.} 

Due to the incorporation of the $\beta$ in the loss function, manual tuning is necessary. Currently, neither experimental data nor theoretical analysis has provided a definitive method for optimally setting the beta parameter across various scenarios. Future research could focus on developing methodologies to automate the determination of $beta$ value. This advancement would significantly improve the adaptability and efficacy of the approach.\\

\noindent\textbf{Harmlessness and honesty.}

Alignment methods are designed to adjust the output distributions of LLMs to improve their helpful, safety, and honesty. However, our current research primarily focuses on enhancing the utility of large models using the UltraFeedback dataset \cite{cui2024ultrafeedbackboostinglanguagemodels}, with less attention given to safety and honesty, which are essential for deploying LLMs across diverse domains. Future research should prioritize developing strategies to bolster both the safety and integrity of large models. From a robustness standpoint, even well-aligned models may become vulnerable when exposed to even minimal amounts of unsafe data. Additionally, fine-tuning aligned LLMs on benign datasets could inadvertently compromise their built-in safety protocols. Addressing these challenges is crucial for ensuring that enhancements in utility do not come at the expense of reduced safety or compromised ethical standards.

\noindent\textbf{Poor performance in mathematics}
Although the ASFT algorithm outperforms other methods in many aspects, we found that it exhibits a higher error rate when dealing with mathematical reasoning problems. This may be due, in part, to the Llama model's limited ability to solve mathematical problems in a zero-shot setting, and also because the preference optimization objective might not effectively increase the likelihood of preference sequences \cite{pal2024smaugfixingfailuremodes}, potentially hindering learning from mathematical preference pairs. Future work could explore strategies to address the suboptimal performance of large models on complex reasoning tasks.

\section{Conclusion}
In this work, we address the alignment of large models within a gradient optimization framework. By analyzing the gradients of alignment algorithms based on the BT model, we introduce a novel and practical approach: ASFT. This method aligns models using absolute likelihoods, overcoming the issue where algorithms loss function based on the BT model tends to decrease the probability of generating human-dispreferred data more rapidly than it increases the likelihood of producing preferred data. Additionally, we thoroughly discuss the theoretical justification for ASFT's operation without a reference model. The effectiveness of our approach is robustly validated through its application to the Llama3-8B model, demonstrating ASFT's significant potential in more effectively integrating language models with human feedback.
\bibliography{aaai25}

\appendix

\onecolumn

\section{Proof of Theorems}

\begin{theorem}

The partial derivatives of $\mathcal{L}_{R}\left(x_1, x_2\right)=-\log \left(\frac{x_1^\beta}{x_1^\beta+x_2^\beta}\right)$ with respect to $x_1$ and $x_2$ are given by:

\begin{equation}
\left\{\begin{array}{l}
\frac{\partial \mathcal{L}_{R}\left(x_1 ; x_2\right)}{\partial x_1}=-\frac{\beta x_2^\beta}{x_1\left(x_1^\beta+x_2^\beta\right)} \\
\frac{\partial \mathcal{L}_{R}\left(x_1 ; x_2\right)}{\partial x_2}=\frac{\beta x_2^{\beta-1}}{x_1^\beta+x_2^\beta} .
\end{array}\right.
\end{equation}

\end{theorem}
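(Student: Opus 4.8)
The plan is to compute the two partial derivatives directly by differentiating the function $\mathcal{L}_{R}(x_1,x_2) = -\log\!\big(x_1^\beta/(x_1^\beta+x_2^\beta)\big)$. First I would rewrite the loss in the more convenient additive form $\mathcal{L}_{R}(x_1,x_2) = -\beta\log x_1 + \log\!\big(x_1^\beta + x_2^\beta\big)$ using the property $\log(a/b) = \log a - \log b$. This separation makes each differentiation a routine application of the chain rule, and it immediately clarifies where each term comes from in the final expressions.

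For $\partial\mathcal{L}_{R}/\partial x_1$, I would differentiate term by term: the first term contributes $-\beta/x_1$, and the second contributes $\frac{\beta x_1^{\beta-1}}{x_1^\beta + x_2^\beta}$ by the chain rule on $\log(\cdot)$ together with $\frac{d}{dx_1}x_1^\beta = \beta x_1^{\beta-1}$. Then I would place these over the common denominator $x_1(x_1^\beta+x_2^\beta)$, obtaining $\frac{-\beta(x_1^\beta+x_2^\beta) + \beta x_1^\beta}{x_1(x_1^\beta+x_2^\beta)}$, and simplify the numerator to $-\beta x_2^\beta$, which yields the claimed formula. For $\partial\mathcal{L}_{R}/\partial x_2$, only the second term depends on $x_2$, so the chain rule gives $\frac{\beta x_2^{\beta-1}}{x_1^\beta + x_2^\beta}$ directly, matching the statement.

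There is essentially no hard part here — the result is a direct computation — so the only thing to be careful about is bookkeeping: keeping the factor of $\beta$ and the exponent $\beta-1$ straight, and correctly performing the numerator cancellation $-\beta(x_1^\beta+x_2^\beta)+\beta x_1^\beta = -\beta x_2^\beta$ in the $x_1$-derivative. I would also note in passing that the domain is implicitly $x_1,x_2>0$, so that the logarithms and the powers $x_i^\beta$ (and the division by $x_1$) are well defined, which is consistent with the interpretation $r_\phi(x,y_w)=\log x_1$, $r_\phi(x,y_l)=\log x_2$ introduced earlier. One could alternatively verify the formulas by substituting $x_1 = e^{\beta_i}$-style reparameterizations, but the direct route above is the cleanest.
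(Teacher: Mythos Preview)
Your proposal is correct and follows essentially the same direct-computation approach as the paper. The only minor difference is that you first split $\mathcal{L}_R$ additively as $-\beta\log x_1 + \log(x_1^\beta+x_2^\beta)$ before differentiating, whereas the paper differentiates $-\log\!\big(x_1^\beta/(x_1^\beta+x_2^\beta)\big)$ via the chain/quotient rule for the $x_1$-derivative and then simplifies; your route avoids the intermediate quotient-rule expression and reaches the same result with slightly less bookkeeping.
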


\begin{proof}

For $\frac{\partial \mathcal{L}_{R}\left(x_1 ; x_2\right)}{\partial x_1}$,

\begin{equation}
\begin{aligned}
\frac{\partial \mathcal{L}_{R}\left(x_1 ; x_2\right)}{\partial x_1} & =-\frac{x_1^\beta+x_2^\beta}{x_1^\beta}\left(\frac{\beta x_1^{\beta-1}}{x_1^\beta+x_2^\beta}+\frac{-\beta x_1^{2 \beta-1}}{\left(x_1^\beta+x_2^\beta\right)^2}\right) \\
& =-\frac{\beta x_1^{\beta-1}\left(x_1^\beta+x_2^\beta\right)-\beta x_1^{2 \beta-1}}{x_1^\beta\left(x_1^\beta+x_2^\beta\right)} \\
& =-\frac{\beta x_1^{\beta-1} x_2^\beta}{x_1^\beta\left(x_1^\beta+x_2^\beta\right)} \\
& =-\frac{\beta x_2^\beta}{x_1\left(x_1^\beta+x_2^\beta\right)}     
\end{aligned}
\end{equation}

For $\frac{\partial \mathcal{L}_{R}\left(x_1 ; x_2\right)}{\partial x_2}$,

\begin{equation}
\begin{aligned}
\frac{\partial \mathcal{L}_{R}\left(x_1 ; x_2\right)}{\partial x_2} & =\frac{1}{x_1^\beta+x_2^\beta} \beta x_2^{\beta-1} \\
& =\frac{\beta x_2^{\beta-1}}{x_1^\beta+x_2^\beta} 
\end{aligned}
\end{equation}

\end{proof}

\begin{theorem}

The partial derivatives of $\mathcal{L}_{align}\left(x_1, x_2\right)=-\log \sigma\left(\log \frac{x_1}{1-x_1}\right)-\log \sigma\left(-\log \frac{x_2}{1-x_2}\right)$ with respect to $x_1$ and $x_2$ are given by:

\begin{equation}
\left\{\begin{array}{l}
\frac{\partial \mathcal{L}_{align}\left(x_1 ; x_2\right)}{\partial x_1}=-\frac{1}{x_1} \\
\frac{\partial \mathcal{L}_{align}\left(x_1 ; x_2\right)}{\partial x_2}=\frac{1}{1-x_2}
\end{array}\right.
\end{equation}

\end{theorem}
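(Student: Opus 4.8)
The plan is to avoid the chain rule entirely by first collapsing each $-\log\sigma(\cdot)$ term using the fact that $\sigma$ is the inverse of the logit map. Concretely, for any $p \in (0,1)$ one has $\sigma\!\left(\log\frac{p}{1-p}\right) = p$, since $\sigma(t) = 1/(1+e^{-t})$ and substituting $t = \log\frac{p}{1-p}$ gives $e^{-t} = \frac{1-p}{p}$, hence $\sigma(t) = \frac{1}{1 + (1-p)/p} = p$. Applying this with $p = x_1$ turns the first summand into $-\log x_1$. For the second summand, note $-\log\frac{x_2}{1-x_2} = \log\frac{1-x_2}{x_2}$, so $\sigma\!\left(-\log\frac{x_2}{1-x_2}\right) = \sigma\!\left(\log\frac{1-x_2}{x_2}\right) = 1 - x_2$ by the same identity (with $p = 1-x_2$), and the second summand becomes $-\log(1-x_2)$.

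After this simplification, $\mathcal{L}_{align}(x_1,x_2) = -\log x_1 - \log(1-x_2)$, which is separable: the first term depends only on $x_1$ and the second only on $x_2$. Differentiating $-\log x_1$ with respect to $x_1$ gives $-\tfrac{1}{x_1}$, and differentiating $-\log(1-x_2)$ with respect to $x_2$ gives $\tfrac{1}{1-x_2}$ (the inner derivative $\tfrac{d}{dx_2}(1-x_2) = -1$ cancels the minus sign). This yields exactly the claimed formulas.

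As an alternative route that does not pass through the logit identity, I would differentiate directly, writing $g_1(x_1) = \log\frac{x_1}{1-x_1}$ so that $g_1'(x_1) = \frac{1}{x_1(1-x_1)}$, and using $\frac{d}{dt}\bigl(-\log\sigma(t)\bigr) = -\bigl(1 - \sigma(t)\bigr)$ together with $\sigma(g_1(x_1)) = x_1$; the chain rule then gives $-(1-x_1)\cdot\frac{1}{x_1(1-x_1)} = -\frac{1}{x_1}$, and the $x_2$ term is handled symmetrically with an extra sign from the argument $-g_2(x_2)$. I expect the only real obstacle to be bookkeeping of signs in the $x_2$ term — in particular remembering that the outer minus in $-\log\frac{x_2}{1-x_2}$, the derivative of $-\log\sigma$, and the derivative of the inner logit combine to produce a net positive $\frac{1}{1-x_2}$ rather than a negative quantity. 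The first approach makes this transparent, so that is the one I would present.
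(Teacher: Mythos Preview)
Your proposal is correct and matches the paper's own proof: the paper also collapses $\mathcal{L}_{align}(x_1,x_2)$ to $-\log x_1 - \log(1-x_2)$ using the sigmoid--logit identity (written there as $\sigma(\log\frac{p}{1-p}) = \frac{p}{1-p+p} = p$) and then differentiates each term directly. Your alternative chain-rule route is a nice bonus but is not in the paper.
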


\begin{proof}

We perform the following transformation on $\mathcal{L}_{align}\left(x_1, x_2\right)$ :

\begin{equation}
\begin{aligned}
\mathcal{L}_{align}\left(x_1, x_2\right) & =-\log \sigma\left(\log \frac{x_1}{1-x_1}\right)-\log \sigma\left(-\log \frac{x_2}{1-x_2}\right) \\
& =-\log \frac{x_1}{1-x_1+x_1}-\log \frac{1-x_2}{1-x_2+x_2}\\
& =-\log x_1-\log \left(1-x_2\right)
\end{aligned}
\end{equation}

The partial derivatives of $\mathcal{L}_{align}\left(x_1, x_2\right)$ with respect to $x_1$ and $x_2$ are given by:

\begin{equation}
\frac{\partial \mathcal{L}_{align}\left(x_1 ; x_2\right)}{\partial x_1}=-\frac{1}{x_1}
\end{equation}

and

\begin{equation}
\frac{\partial \mathcal{L}_{align}\left(x_1 ; x_2\right)}{\partial x_2}=\frac{1}{1-x_2}
\end{equation}

\end{proof}
\section{Pseudocode}

PyTorch code for the ASFT Loss is provided below:

\begin{verbatim}
import torch.nn.functional as F

def preference_loss(self, chosen_logps, rejected_logps, beta):
    """
    chosen_logps: Log-probabilities of the chosen options.
    
    rejected_logps: Log-probabilities of the rejected options.
    
    ASFT_loss: The total loss which combines the supervised fine-tuning loss 
    (sft_loss) and the scaled alignment loss (beta * log_f_theta). 

    log_f_theta: Computes a modified log odds ratio between the chosen and rejected 
    outputs. 
    
    sft_loss: The negative log probability of the chosen outputs, which represents 
    the supervised fine-tuning (SFT) loss.
    
    beta: A scaling factor applied to the log_f_theta, balancing the influence of the 
    alignment-focused loss against the supervised fine-tuning loss. 
    """
    
    log_f_theta = - F.logsigmoid(chosen_logps - torch.log1p(-torch.exp(chosen_logps))) 
    - F.logsigmoid( - rejected_logps + torch.log1p(-torch.exp(rejected_logps)))
    
    sft_loss = -chosen_logps
      
    ASFT_loss = sft_loss + beta * log_f_theta
    
    return ASFT_loss
\end{verbatim}

\newpage
\section{Experimental Details}
All models are fine-tuned from the publicly accessible meta-llama/llama3. Flash-Attention 2 (\cite{dao2023flashattention2fasterattentionbetter}) was applied for all the models during fine-tuning for computational efficiency. For optimizer, we use AdamW optimizer (\cite{loshchilov2019decoupledweightdecayregularization}) with a learning rating of 1e-05. And the linear warmup with cosine decay was applied for the learning rate. For input length, every instance was truncated and padded to 2,048 tokens for UltraFeedback. We train all models for 2 epoch. The batch size is 4. The default parameterization coefficient $\beta$ is 0.1.

\begin{figure}[h!]

\centering
\begin{tabular}{|p{2.5cm}|p{11.2cm}|}
\hline

\centering \textbf{Input} & 
Edit the following paragraph to correct any grammatical errors:She didn't remembre where is her purse, so I thinks its in the car but he's say it's on kitchen table but he are not sure, and then they asked me to looking for it, she's say, "Can you?", and I responds with, "Maybe, but ain't no sure," and he not heard me, and, "What?", he asks, "Did you found it?". \\ \hline

\centering \textbf{ASFT-Llama3 ultrafeedback} &
Here is the edited paragraph with corrected grammatical errors: 

She couldn't remember where her purse was, so I think it might be in the car, but he said it was on the kitchen table, but he wasn't sure. Then they asked me to look for it. She said, "Can you?" And I responded, "Maybe, but I'm not sure." He didn't hear me, and then he asked, "What?" "Did you find it?" he asked.
 \\ \hline

\centering \textbf{GPT-4 judge} &
The assistant did an excellent job in correcting the grammatical errors in the paragraph. They maintained the content and meaning of the original text while improving its structure, grammar, tense, and punctuation. The assistant's revisions produced a correctly formatted and grammatically accurate text that sounds very natural and is easy to understand. The assistant's answer is relevant, accurate, helpful, and shows attention to detail. 
 \\ \hline
\end{tabular}
\caption{A case study on mtbench, the Llama3-Instruct model trained by ASFT was used to give high quality responses to input, and GPT-4 was used as the evaluation model to give results.}
\end{figure}

\end{document}